\newtheorem{theorem}{Theorem}[section]
\title{A partition-based similarity for \\ classification distributions}
\newcommand\blfootnote[1]{%
  \begingroup
  \renewcommand\thefootnote{}\footnote{#1}%
  \addtocounter{footnote}{-1}%
  \endgroup
}
\author{Hayden~S.~Helm$^{*}$ \& Ronak D. Mehta \\
Microsoft Research \& Johns Hopkins University \\
\texttt{\{haydenshelm, rmehta004\}@gmail.com} \\
\AND
Brandon~Duderstadt \\
Square \\ \texttt{bduderstadt@squareup.com}
\\
\AND
Weiwei~Yang \& Christopher~White \\
Microsoft Research \\
\texttt{\{weiwya, chwh\}@microsoft.com} \\
\AND
Ali~Geisa \& Joshua~T.~Vogelstein \& Carey~E.~Priebe \\
Johns Hopkins University \\
\texttt{\{ageisa1, jovo, cep\}@jhu.edu}
}
\begin{document}
\maketitle

\blfootnote{
$ ^{*} $ corresponding author
}
\begin{abstract}
Herein we define a measure of similarity between classification distributions that is both principled from the perspective of statistical pattern recognition and useful from the perspective of machine learning practitioners. 
In particular, we propose a novel similarity on classification distributions, dubbed \textit{task similarity}, that quantifies how an optimally-transformed optimal representation for a source distribution performs when applied to inference related to a target distribution. 
The definition of task similarity allows for natural definitions of adversarial and orthogonal distributions. 
We highlight limiting properties of representations induced by (universally) consistent decision rules and demonstrate in simulation that an empirical estimate of task similarity is a function of the decision rule deployed for inference.
We demonstrate that for a given target distribution, both transfer efficiency and semantic similarity of candidate source distributions correlate with empirical task similarity.
Finally, we show that empirical task similarity captures an etymologically meaningful relationship between language tasks.  
\end{abstract}







\section{Introduction} 
Recent successful applications of machine learning have come at the cost of incredible resource \citep{strubell2019energy, brown2020language} and data \citep{Krizhevsky09learningmultiple, balali2015detection, irvin2019chexpert} requirements. 
Even after making expensive models and data publicly available,
practitioners consistently face the problem of whether a particular pre-existing model or dataset can be gainfully used to assist them in building a successful model for their application. Indeed, the performance and generalization gap \citep{hand2006classifier} between research and production can be quite large.

Hence, it would be useful to be able to empirically evaluate the potential utility of a model or auxiliary dataset before engineering complex solutions based on them. To address this problem, we propose a similarity on a pair of classification tasks that is motivated by a fundamental object in statistical pattern recognition. 

We show that the empirical version of our proposed similarity positively correlates with accuracy improvements in a transfer setting.
In particular, we show in a CIFAR100-based setting that choosing the model pretrained on the source dataset that maximizes the proposed empirical similarity (amongst candidate source datasets) maximizes fine tuning accuracy on the target dataset.
Furthermore, we find that source-target pairs that share coarse labels have a higher task similarity source-target pairs that do not share coarse labels.

\noindent \textbf{Contributions:} We propose a similarity on classification distributions that is rooted in statistical pattern recognition, define relevant properties of pairs of distributions based on the proposed similarity, and highlight key analytical properties of the similarity for a broad class of well-studied decision functions. We demonstrate that the similarity can be used to assess the utility of auxiliary data in transfer learning settings.

\label{introduction}

\section{Motivation \& Background}

\subsection{Classification Distribution Similarity Desiderata}
Recall that a classification distribution is a probability distribution defined on an input space $ \mc{X} $ and a categorical action space $ \mc{Y} = \{1, \hdots, k\} $. We call $ x \in \mc{X} $ a pattern and $ y \in \mc{Y} $ a label. In our discussion we care about a pair of classification distributions, $ F^{T} $ and $ F^{S} $ defined on $ \mc{X} \times \{1, \hdots, k^{T}\} $ and $ \mc{X} \times \{1, \hdots, k^{S}\} $, respectively.


There is a plethora of desirable properties of any similarity on $ F^{T} $ and $ F^{S} $. First, because a permutation of the labels corresponding to each of the class conditional distributions should not change the relationship between $ F^{T} $ and $ F^{S} $, a similarity on classification distributions should be invariant to a permutation of labels. Second, because a change in the categorical action space (either making it bigger or smaller) does not always affect a distribution in a meaningful way, a similarity on classification distributions should be well-defined on pairs of distributions with categorical action spaces of different sizes. Third, the maximal and minimal values of a similarity (if they exist) should be interpretable. That is, a similarity on classification distributions should be
\begin{enumerate}[label=\roman*)]
    \item invariant to a permutation of labels,
    \item well-defined for pairs of distributions with categorical action spaces of different sizes, and
    \item interpretable.
\end{enumerate}

\subsection{Statistical Pattern Recognition and Bayes rule}\label{subsec:stat-pat} Let 
\begin{align*} (X, Y), (X_{1}, Y_{1}), \ldots, (X_{n}, Y_{n}) \iid  F 
\end{align*}
be random variables distributed according to the joint distribution $ F $ with input realizations $ X_{i} = x_{i} \in \mc{X} $ and label realizations $ Y_{i} = y_{i} \in \{1, \hdots, k\} $. We let $ F_{X} $ be the marginal distribution of $ X_{i} $.  The objective in statistical pattern recognition \citep{devroye2013probabilistic, duda2012pattern} is to use the training data $ \mc{D}_{n} = \{(X_{i}, Y_{i})\}_{i \in \{1, \hdots, n\}} $ to learn a decision function $ h_{n} $ that correctly maps $ X $ to the true but unknown $ Y $. 

Formally, the objective in statistical pattern recognition is to minimize the risk of $ h_{n} $. The risk of a decision function $ h: \mc{X} \to \{1, \hdots, k\} $
is defined as the expected value of a loss function $ \ell: \{1, \hdots, k\} \times \{1, \hdots, k\} \to [0, \infty) $ evaluated at the output of the decision rule and the truth: $ \ell(h(X), Y) $. Or, the risk $ R $ of $ h $ for $ F $ is
\begin{align*}
    R_{F}(h) := \mbb{E}_{F}\left[\ell(h(X), Y)\right].
\end{align*} Typical loss functions include 0-1 loss, where the decision rule incurs a loss of $ 0 $ if $ h(X) = Y $ and $ 1 $ if $ h(X) \neq Y $ and cross-entropy. 


A decision function that minimizes $ R $ is known as \textit{Bayes rule} and is denoted $ h^{*} $. Under 0-1 loss $ h^{*} $ maps $ x \in \mc{X} $ to the class that maximizes the conditional probability after observing $ x $:
\begin{align*}
    h^{*}(x) = \argmax_{y \in \{1, \hdots, k\}} Pr(Y=y|X=x).
\end{align*} We say a sequence of decision functions, or a decision rule $ (h_{n})_{n=1}^{\infty} $, is \textit{consistent} for $ F $ if, as the number of training samples goes to infinity, the risk of $ h_{n} $ approaches $ R^{*}_{F} := R_{F}(h^{*}) $. A decision rule $ (h_{n})_{n=1}^{\infty} $ is \textit{universally consistent} if, as the number of training samples goes to infinity, the risk of $ h_{n} $ approaches $ R_{F}^{*} $ for all distributions $ F $. 

\label{subsec:transfer-learning}
\subsection{Transfer Learning} Transfer learning \citep{pan2009survey} is a generalization of classical statistical pattern recognition. Let
\begin{align*}
    (X_{1}, Y_{1}, T_{1}), \hdots, (X_{n}, Y_{n}, T_{n}) \iid F^{transfer}
\end{align*}
be random variables distributed according to the joint distribution $ F^{transfer} $ with realizations in $ \mc{X} \times \mc{Y} \times \mc{T} $ where $ \mc{X} $ and $ \mc{Y} $ are as in classical statistical pattern recognition and $ t \in \mc{T} = \{0,1\} $ is a task label. In particular, if $ t = 0 $ then $ (X, Y) \sim F^{S} $ and if $ t = 1 $ then $ (X, Y) \sim F^{T} $. We refer to $ F^{T} $ as the target distribution and $ F^{S} $ as the source distribution. The objective in transfer learning is to use the training data $ \mc{D}_{n} $ to learn a decision rule $ h_{n} $ that maps $ X $ to the true but unknown $ Y $ for patterns known to be from the target distribution.

A popular approach to transfer learning is to use the source data $ \mc{D}_{n}^{S} = \{(X_{i}, Y_{i}, T_{i}) \in \mc{D}_{n}: T_{i} = 0\} $ to learn a transformer $ u^{S}: \mc{X} \to \tilde{\mc{X}} $ that maps a pattern to a space $ \tilde{\mc{X}} $ more amenable for inference and to then use the target data $ D_{n}^{T} $ learn a classifier $ \tilde{h}^{T} $ from $ \tilde{\mc{X}} $ to the categorical action space \citep{duda2012pattern, thrun2012learning}. $ \tilde{\mc{X}} $ is sometimes referred to as a representation space \citep{bengio2012deep, bengio2013representation}. 
A more modular approach further decomposes $ \tilde{h}^{T} $ into two separate functions: a voter $ v^{T} $ and a decider $ w^{T} $. This decomposition enables omnidirectional transfer in diverse learning settings \citep{vogelstein2020general}.

We evaluate the efficacy of a transfer learning algorithm via transfer efficiency \citep{vogelstein2020general}:
\begin{align*}
TE(h_{n}^{S, T}, h_{n}^{T}) := \mbb{E}\left[{R_{F^{T}}\left(h_{n}^{S,T})\right)}\right]/\mbb{E}\left[{R_{F^{T}}\left(h_{n}^{T}\right)}\right]
\end{align*} where $ h_{n}^{S,T} $ has access to data from both the source distribution and the target distribution and $ h_{n}^{T} $ has access to data only from the target distribution. 




\subsection{Related work}

There has been a flurry of recent empirical work \citep{kifer2004detecting, achille2019task2vec, bao2018information, tran2019transferability, bhattacharjee2020p2l, nguyen2020leep} defining statistics that help to determine whether or not a particular model or representation learned on a set of source data will perform well when adjusted and applied to the target data. In the majority of these works \citep{bao2018information, tran2019transferability, nguyen2020leep} the authors begin their discussion with a pre-trained model on the source data and explore different ways to capture the utility of the model for a given target distribution.

We take a different perspective from the recent empirical work -- starting from the true-but-unknown distributions $ F^{T} $ and $ F^{S} $ -- and are thus able to both evaluate the proposed similarity empirically and study relevant properties analytically. Our approach, like LEEP \citep{nguyen2020leep}, does not require the patterns from the source and target data to be the same but is limited to patterns from the same input space.
One limitation of our work, as compared to others, is that an estimate of our similarity requires both a model trained on data from a source distribution and a model trained on data from the target distribution. 

From the theoretical side \citep{baxter2000model, ben2003exploiting, xue2007multi} our analysis is a mixture of Baxter's \citep{baxter2000model} and Xue et al's \citep{xue2007multi} in that we quantify the ``closeness" of the tasks via the representation of the data induced by the optimal decision function. 

Our work is also closely related to metrics and similarities defined on permutations of a finite set of objects, in particular the Rand index (and adjusted Rand index) \citep{doi:10.1080/01621459.1971.10482356} and ways to measure the similarity of different clusterings of the same objects \citep{hubert1985comparing}. Our proposed similarity is a generalization of these, as we both allow the patterns to be different and for them to be elements of a non-discrete space.

\label{background}

\section{Partitions and decision rules}

The similarity that we define in Section \ref{task-similarity-and-orthogonality} is based on partitions of $ \mc{X} $. We focus on partitions for two reasons: first, the optimal partition is a sufficient statistic for discrimination for a fixed loss function that retains interpretability in the original input space; second, many popular or successful modern machine learning algorithms, such as $K$-nearest neighbors \citep{1676031}, random forests \citep{breiman2001random} and neural networks with ReLU activation functions \citep{montufar2014number} partition $ \mc{X} $ and learn a posterior estimate per cell \citep{Priebe2020.04.29.068460}. Thus, partitions are both principled and practical.

For the remainder of this paper we assume that the loss function under consideration is 0-1 loss. \textbf{We note that generalizing to different losses may be non-trivial.}

Recall that a partition $ \mc{A} = \{A_{1}, \hdots, A_{m}\} $ of $ \mc{X} $ is a set of connected \textit{parts} or \textit{cells} such that $ \cup_{A \in \mc{A}} A = \mc{X} $ and $ A_{i} \cap A_{j} = \varnothing $ if $ i \neq j $. A Bayes rule $ h^{*} $ induces the optimal partition $ \mc{A}^{*} $ of $ \mc{X} $ for $ F $. An optimal partition is the smallest partition, as defined by the cardinality of $ \mc{A} $, such that the elements of $ A \in \mc{A}$ are all mapped to a single category by $ h^{*} $. We assume that the optimal partition is unique. The optimal partition $ \mc{A}^{*} $ is not unique to $ F $. For example, distributions that differ only in a permutation of the labels share an optimal partition. We let $ \mc{F}_{\mc{A}} $ be the set of distributions such that for all $ F \in \mc{F}_{\mc{A}} $, $ \mc{A} $ is its optimal partition. We say $\mc{B}$ is a subpartition of $\mc{A}$ if for all $A \in \mc{A}$, there exist a subset $\mc{B}_A \subset \mc{B}$ such that $A = \bigcup_{B \in \mc{B}_A}B$.


The partition $ \mc{A} $ induces a set of decision rules $ \mc{H}_{\mc{A}} $. This set is comprised of functions that label the collection of cells of the partition differently. In particular, if the number of labels is fixed to be $ k $ then $ |\mc{H}_{\mc{A}}| = k^{|\mc{A}|} $. We note that, by definition, the optimal decision function $ h^{*} $ for $ F $ is an element of the set of decision functions induced by its optimal partition $ \mc{A}^{*} = \{A^{*}_{1}, \hdots, A^{*}_{m}\} $. Furthermore, any decision function in $ \mc{H}_{\mc{A}} $, including the optimal decision function $ h^{*} $, is also an element of the set of decision functions induced by any subpartition of $ \mc{A} $. 

Let $(\mc{A}_n)_{n = 1}^{\infty}$ be a sequence of partitions. Let $R_{n} = \inf_{h \in \mc{H}_{\mc{A}_n}}R_{F}(h)$ be the best hypothesis for $ F $ in $ \mc{H}_{\mc{A}_{n}} $.
We say $(\mc{A}_n)_{n = 1}^{\infty}$ induces a consistent decision rule for $F$ if $ R_n \to R^*$ as $ n \to \infty $, where $R^*$ is the Bayes risk for $F$.
Similarly, we say a sequence of partitions $ (\mc{A}_{n})_{n=1}^{\infty} $ induces a universally consistent decision function if $ R_n \to R^*$ as $ n \to \infty $ for all $ F $. 

Herein we restrict our discussion to distributions such that $ \argmax_{y} Pr(Y=y | X=x) $ is unique almost everywhere and $ |\mc{A}^{*}| $ is countable. 

\section{Task Similarity and Orthogonality}\label{task-similarity-and-orthogonality}

Let $ F^{T} $ and $ F^{S} $ be two classification distributions defined on $ \mc{X} \times \{1, \hdots, k^{T}\} $ and $ \mc{X} \times \{1, \hdots, k^{S}\} $, respectively, with corresponding optimal decision functions $ h^{*}_{T} $ and $ h^{*}_{S} $ and optimal partitions $ \mc{A}^{*}_{T} $ and $ \mc{A}^{*}_{S} $.

In this section we define \textit{task similarity} as a measure of similarity between $ F^{T} $ and $ F^{S} $ based on their respective optimal partitions. Using the definition of task similarity, we then define \textit{adjusted task similarity} to ensure that the similarity has all three desiderata discussed above. Lastly, we use adjusted task similarity to define an \textit{adversarial distribution} and a pair of \textit{orthogonal distributions}.

The \textit{task similarity} of $ F^{T} $ and $ F^{S} $ is the measure of $ \mc{X} $, from the perspective of $ F^{T} $, in which an optimally transformed $ h^{*}_{S} $ agrees with $ h^{*}_{T} $. 

\begin{Def}[Task similarity]
Let $ F^{T} $ and $ F^{S} $ be two classification distributions with corresponding optimal decision functions $ h^{*}_T $ and $ h^{*}_S $ and optimal partitions $ \mc{A}_T^{*} $ and $ \mc{A}_S^{*} $, respectively. The task similarity of $ F^{S} $ to $ F^{T} $ is defined as
\begin{equation}\label{eq:task-similarity}
    TS(F^{T}, F^{S}) = 
    \sum_{A_{S} \in \mc{A}_S^*}
    \max_{y \in \{1, \dots, k^T\}}
    \int_{A_{S}}
    \mbb{I}\{h_{T}^{*}(x) = y\} dF^{T}_{X}.
\end{equation}
\end{Def}



The definition of task similarity can be thought of as a re-labeling of the cells of the optimal partition of the source distribution such that the new labeling agrees maximally (says the measure induced by $ F^{T} $) with $ h^{*}_{T} $. Indeed, task similarity is, loosely, a non-centered correlation of decision functions under a particular measure.

Task similarity \eqref{eq:task-similarity} satisfies desiderata i) and ii). It satisfies i) by virtue of the maximization and satisfies ii) by virtue of the maximization being over the set of labels corresponding to the target distribution. 
We note that $ TS(F^{T}, F^{S}) = 1 $ if and only if $ \mc{A}^{*}_{S} $ is a subpartition of $ \mc{A}^{*}_{T} $ and that $ TS(F^{T}, F^{S}) = TS(F^{S}, F^{T}) = 1 $ if and only if $ \mc{A}^{*}_{T} = \mc{A}^{*}_{S} $.
Hence, the maximal value of $ TS $ is interpretable. 

The minimal value of $ TS $, however, is not. As defined, task similarity does not account for the possibility that the argmax of the integrand in Equation \eqref{eq:task-similarity} may contain more than one element. This may happen, for example, in a two class classification problem when the conditional distributions of the target distribution have the same measure in a given cell of the optimal partition of $ F^{S} $. In that case, either of the two elements of the argmax would perform at chance on unlabeled patterns from $ F^{T} $.

To account for this (and with the goal of interpretability in mind) we remove the contributions of the mass from cells of $ \mc{A}^{*}_{S} $ where the argmax of the integrand of Equation \eqref{eq:task-similarity} is not unique. We define $ \psi_{A_{S}}^{T} $ to be the relevant argmax
\begin{align*}
    \psi_{A_{S}}^T = \argmax_{y \in \{1, \hdots, k^{T}\}} \int_{A_{S}}
    \mbb{I}\{h_{T}^{*}(x) = y\} dF^{T}_{X}
\end{align*} and only allow contributions to $ TS $ from cells of $ \mc{A}^{*}_{S} $ where the argmax is a singleton.

\begin{Def}[Adjusted Task Similarity]
Let $ F^{T} $ and $ F^{S} $ be two classification distributions with corresponding decision functions $ h^{*}_{T} $ and $ h^{*}_{S} $ and optimal partitions $ \mc{A}^{*}_{T} $ and $ \mc{A}^{*}_{S} $, respectively. Further, let $ \{\psi_{A_{S}}^{T}\}_{A_{S} \in \mc{A}_{S}} $ be the collection of argmaxes of the integrands in Equation \eqref{eq:task-similarity}. The adjusted task similarity of $ F^{S} $ to $ F^{T} $ is defined as
\begin{align}
    ATS(F^{T}, F^{S}) =
    \sum_{A_S \in \mc{A}_S^{*}} 
    \max_{y \in \{1, \dots, k^T\}}
    \int_{A_{S}} 
        \mbb{I}\{h^{*}_T(x) = y\}
        \cdot
        \mbb{I}\{|\psi_{A_{S}}^{T}| = 1\}
        dF^{T}_{X}.
\end{align}
\end{Def}
With this adjustment to task similarity, we have that $ ATS(F^{T}, F^{S}) = 0 $ if and only if the label that maximizes the integrand is not unique for all of $ \mc{X} $.  In balanced two class classification, this means that the decision rule that outputs a maximizer of the integrands performs at chance under 0-1 loss. Hence, the minimal and value of $ ATS(F^{T}, F^{S}) $ is interpretable and $ ATS $ satisfies desiderata iii).

Both task similarity and adjusted task similarity are inherently asymmetric, like the well studied $ f $-divergences \citep{csiszar1967information, liese2006divergences, cover2012elements}, because of the integration over $ \mc{X} $ with respect to the measure induced by $ F $. A symmetric similarity can be defined by considering a simple average of $ TS(F^{T}, F^{S}) $ and $ TS(F^{S}, F^{T}) $ or of $ ATS(F^{T}, F^{S}) $ and $ ATS(F^{T}, F^{S}) $. The symmetric $ ATS $ satisfies i), ii) and iii).

An interpretable lower bound allows us to define meaningful properties of the pair of distributions in terms of $ ATS $. In particular, a partition in which a distribution has a non-unique $ \argmax $ in every cell is maximally bad (in some sense). Any distribution that has this partition as its optimal partition can be thought of as ``adversarial" for that distribution. Following that line of thought, if both the distributions are adversarial for one another, these distributions can be thought of as non-informative, or ``orthogonal":

\begin{Def}[Adversarial Distribution]
A classification distribution $ F^{S} $ is \textit{adversarial} for $ F^{T} $ iff
\begin{equation*}\label{def:orthogonal-tasks}
    ATS(F^{T}, F^{S}) = 0
\end{equation*}
\end{Def}

\begin{Def}[Orthogonal Distributions]
Two classification distributions $ F^{S} $ and $ F^{T} $ are \textit{orthogonal} iff they are mutually adversarial.
\end{Def}

\subsection{Properties of (Adjusted) Task Similarity}\label{sec:properties}

We now formally state some properties of (adjusted) task similarity. The proofs of Theorems \ref{thm:ats=1=ats}, \ref{thm:non-decreasing}, \ref{thm:consistent_classifier} and \ref{thm:histogram-rules}, along with non-highlighted results, are in Appendix \ref{app:theorems}. 

\begin{theorem}\label{thm:ats=1=ats}
Let $\mc{A}$ be a partition on $\mc{X}$, and let $F^T, F^S \in \mc{F}_{\mc{A}}$, then $ATS(F^T, F^S) = ATS(F^S, F^T) = 1$.
\end{theorem}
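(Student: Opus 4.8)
The plan is to reduce the claim to a direct evaluation of the integral defining $ATS$ cell by cell. Since $F^T, F^S \in \mc{F}_{\mc{A}}$, both distributions have $\mc{A}$ as their optimal partition, so $\mc{A}^*_T = \mc{A}^*_S = \mc{A}$. Consequently the outer sum in $ATS(F^T, F^S)$ ranges over the cells $A \in \mc{A}$, and by the defining property of the optimal partition $h^*_T$ is constant on each such cell; write $y_A$ for its value on $A$.

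First I would fix a cell $A \in \mc{A}$ and compute the inner integral. Since $h^*_T(x) = y_A$ for every $x \in A$, the indicator $\mbb{I}\{h^*_T(x) = y\}$ equals $1$ on all of $A$ when $y = y_A$ and $0$ on all of $A$ otherwise, so $\int_A \mbb{I}\{h^*_T(x) = y\}\, dF^T_X = F^T_X(A)$ for $y = y_A$ and $0$ for $y \neq y_A$. Hence the maximum over $y$ is $F^T_X(A)$, and whenever $F^T_X(A) > 0$ this maximum is attained only at $y_A$, i.e.\ $\psi^T_A = \{y_A\}$ is a singleton, so the adjustment factor $\mbb{I}\{|\psi^T_A| = 1\}$ is $1$ and the cell contributes exactly $F^T_X(A)$ to $ATS(F^T, F^S)$.

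The one point needing care --- the step I expect to be the main, if modest, obstacle --- is the cells with $F^T_X(A) = 0$, where every integral vanishes and $\psi^T_A$ is the whole label set, so the adjustment removes the cell entirely. This is harmless: such a cell then contributes $0$, which coincides with its mass $F^T_X(A) = 0$. Thus in every case cell $A$ contributes precisely $F^T_X(A)$, and the adjustment never diminishes a positive contribution.

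Finally I would sum over the partition. Because $\mc{A}$ is a (countable) partition of $\mc{X}$ and $F^T_X$ is a probability measure, countable additivity gives $ATS(F^T, F^S) = \sum_{A \in \mc{A}} F^T_X(A) = F^T_X(\mc{X}) = 1$. Swapping the roles of $F^T$ and $F^S$ is legitimate since both belong to $\mc{F}_{\mc{A}}$ and the argument used only that $\mc{A}$ is the optimal partition of each; the identical computation then yields $ATS(F^S, F^T) = 1$, completing the proof.
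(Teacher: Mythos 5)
Your proof is correct and follows essentially the same route as the paper's: both evaluate the inner integral cell by cell using the fact that $h^*_T$ is constant (with value $y_A$) on each cell of $\mc{A}$, conclude that $y_A$ is the maximizer so the adjustment indicator $\mbb{I}\{|\psi^T_A|=1\}$ equals one, and then sum the cell masses $F^T_X(A)$ over the partition to obtain $1$, invoking symmetry for the reverse direction. If anything, your handling of cells with $F^T_X(A)=0$ --- where the argmax is the whole label set and the adjustment merely removes an already-zero contribution --- is more careful than the paper's proof, which asserts without qualification that the argmax is always the singleton $\{y_A\}$.
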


This theorem confirms the intuition that the task similarity between distributions that have the same optimal partition should be $1$.

\begin{theorem}\label{thm:non-decreasing}
Let $ \mc{A} $ be a partition of $ \mc{X} $ and let $ \mc{B} $ be a subpartition of $ \mc{A} $. Let $ F_{\mc{A}} \in \mc{F}_{\mc{A}} $ and $ F_{\mc{B}} \in \mc{F}_{\mc{B}} $. Then for all $ F $
\begin{align*}
    TS(F, F_{\mc{A}}) \leq TS(F, F_{\mc{B}})
\end{align*}
\end{theorem}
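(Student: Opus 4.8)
The plan is to exploit the fact that $TS(F, \cdot)$ depends on its source argument only through that source's optimal partition. Consequently $TS(F, F_{\mc{A}})$ and $TS(F, F_{\mc{B}})$ share the same target objects, namely the optimal decision function $h^*_F$ and the target marginal $F_X$, and differ only in whether the maximized integrals are summed over the cells of $\mc{A}$ or over the cells of the finer partition $\mc{B}$. Writing both quantities out directly from Equation \eqref{eq:task-similarity}, the claim reduces to a cell-wise comparison, and the analytic content is the elementary inequality that the maximum of a sum is at most the sum of the maxima.

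First I would fix the bookkeeping of the subpartition. Since $\mc{B}$ is a subpartition of $\mc{A}$, each $A \in \mc{A}$ decomposes as $A = \bigcup_{B \in \mc{B}_A} B$ for some $\mc{B}_A \subseteq \mc{B}$; because the cells of $\mc{A}$ are pairwise disjoint, every $B \in \mc{B}$ lies in exactly one such $\mc{B}_A$, so $\{\mc{B}_A\}_{A \in \mc{A}}$ partitions $\mc{B}$ and the decompositions $A = \bigcup_{B \in \mc{B}_A} B$ are disjoint. This is what lets me reindex a double sum over $A$ and then over $\mc{B}_A$ as a single sum over $\mc{B}$.

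The key step is the cell-wise bound. For a fixed $A \in \mc{A}$, countable additivity of $F_X$ over the disjoint decomposition, followed by the max-of-a-sum inequality, gives
\begin{align*}
\max_{y} \int_A \mbb{I}\{h^*_F(x) = y\}\, dF_X
&= \max_{y} \sum_{B \in \mc{B}_A} \int_B \mbb{I}\{h^*_F(x) = y\}\, dF_X \\
&\leq \sum_{B \in \mc{B}_A} \max_{y} \int_B \mbb{I}\{h^*_F(x) = y\}\, dF_X,
\end{align*}
where the inequality holds because the single label $y$ attaining the left-hand maximum is one feasible (but not necessarily optimal) choice in each summand on the right, and the maximizations range over the same label set of $F$ in every term. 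Summing over all $A \in \mc{A}$ and invoking the partition of $\mc{B}$ established above to collapse $\sum_{A \in \mc{A}} \sum_{B \in \mc{B}_A}$ into $\sum_{B \in \mc{B}}$ yields exactly $TS(F, F_{\mc{A}}) \leq TS(F, F_{\mc{B}})$.

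I expect the only subtle points to be purely organizational rather than analytic: verifying that the same target decision function $h^*_F$ and the same label range appear in both expressions so that the cell-wise bounds compose, and justifying the reindexing of the double sum. Convergence of the (possibly countable) sums is not an obstacle, since it is guaranteed by the standing assumption that $|\mc{A}^*|$ is countable together with $F_X$ being a probability measure, so all the integrals are nonnegative and bounded by $1$ in aggregate.
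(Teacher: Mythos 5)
Your proposal is correct and follows essentially the same argument as the paper's proof: decompose each $A \in \mc{A}$ into its constituent cells $\mc{B}_A \subseteq \mc{B}$, apply additivity of the integral followed by the max-of-a-sum $\leq$ sum-of-maxes inequality cell-wise, then sum over $A \in \mc{A}$ and reindex the double sum as a sum over $\mc{B}$. Your treatment of the bookkeeping (that $\{\mc{B}_A\}_{A \in \mc{A}}$ genuinely partitions $\mc{B}$, and that countable nonnegative sums pose no convergence issue) is slightly more explicit than the paper's, but the underlying proof is the same.
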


This theorem confirms the observation that subpartitions are more expressive.

\begin{theorem}\label{thm:consistent_classifier}
Let $ F^{T} $ be a classification distribution and $ \mc{A}_{1}, \mc{A}_{2} , \hdots $ be a sequence of partitions that induces a consistent decision rule for $ F^{T} $. For a fixed $ n $ let $ F_n^{S} \in  \mc{F}_{\mc{A}_{n}} $. Then
\begin{equation*}
    \lim_{n \to \infty} TS(F^{T}, F^{S}_{n}) = 1.
\end{equation*}
\end{theorem}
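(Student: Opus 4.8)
The plan is to express task similarity as a lower bound involving the agreement between a single rule from $\mc{H}_{\mc{A}_n}$ and the target Bayes rule $h^*_T$, and then to control this agreement using the consistency hypothesis together with a margin argument. Write $\eta_y(x) = Pr(Y = y \mid X = x)$ for the target posteriors and note that, since $F_n^S \in \mc{F}_{\mc{A}_n}$, the sum in \eqref{eq:task-similarity} ranges over the cells of $\mc{A}_n$. First I would introduce the risk-minimizing cell-wise rule $\hat h_n \in \mc{H}_{\mc{A}_n}$, which assigns to each cell $A$ the label $\hat h_n(A) = \argmax_{y} \int_A \eta_y \, dF^T_X$ and attains $R_{F^T}(\hat h_n) = R_n$. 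Because $\hat h_n$ is constant on each cell, plugging its label into the cell-wise maximum in \eqref{eq:task-similarity} gives
\begin{equation*}
TS(F^T, F_n^S) \;\ge\; \sum_{A \in \mc{A}_n} \int_A \mbb{I}\{h^*_T(x) = \hat h_n(A)\}\, dF^T_X \;=\; Pr\big(\hat h_n(X) = h^*_T(X)\big).
\end{equation*}
Since the cell-wise maxima sum to at most $F^T_X(\mc{X}) = 1$ we also have $TS(F^T, F_n^S) \le 1$, so $0 \le 1 - TS(F^T, F_n^S) \le Pr\big(\hat h_n(X) \neq h^*_T(X)\big)$, and it suffices to show this disagreement probability vanishes.

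Next I would relate disagreement with $h^*_T$ to excess risk via the margin $m(x) := \eta_{h^*_T(x)}(x) - \max_{y \neq h^*_T(x)} \eta_y(x)$, which is strictly positive almost everywhere by the standing assumption that the target $\argmax$ is unique a.e. Under $0$--$1$ loss the excess risk of any rule $h$ is $R_{F^T}(h) - R^* = \int \big(\eta_{h^*_T(x)}(x) - \eta_{h(x)}(x)\big)\, dF^T_X$, and on the set $\{\hat h_n \neq h^*_T\}$ the integrand is at least $m(x)$ (the chosen label is not the top one, so its posterior is below the runner-up). Fixing a threshold $\epsilon > 0$ and writing $B_\epsilon = \{x : m(x) \le \epsilon\}$, restricting the excess-risk integral to $\{\hat h_n \neq h^*_T\} \cap B_\epsilon^c$ yields
\begin{equation*}
R_n - R^* \;\ge\; \epsilon\, F^T_X\big(\{\hat h_n \neq h^*_T\} \cap B_\epsilon^c\big).
\end{equation*}

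Finally I would assemble the bound by splitting the disagreement over $B_\epsilon$ and its complement,
\begin{equation*}
Pr\big(\hat h_n(X) \neq h^*_T(X)\big) \;\le\; F^T_X(B_\epsilon) + F^T_X\big(\{\hat h_n \neq h^*_T\} \cap B_\epsilon^c\big) \;\le\; F^T_X(B_\epsilon) + \frac{R_n - R^*}{\epsilon},
\end{equation*}
and then pass to the limit in the right order: for fixed $\epsilon$, consistency gives $R_n - R^* \to 0$, so $\limsup_n \big(1 - TS(F^T, F_n^S)\big) \le F^T_X(B_\epsilon)$; letting $\epsilon \downarrow 0$ and using continuity of the probability measure $F^T_X$ from above, $F^T_X(B_\epsilon) \to F^T_X(\{m = 0\}) = 0$, since $\{m = 0\}$ is exactly the set where the target $\argmax$ fails to be unique. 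Hence $1 - TS(F^T, F_n^S) \to 0$, i.e. $TS(F^T, F_n^S) \to 1$. The main obstacle is conceptual rather than computational: consistency is a statement about risk relative to the label $Y$, whereas task similarity measures agreement with the Bayes \emph{rule} $h^*_T$, and these two quantities genuinely diverge where the posteriors are near-uniform; the margin decomposition is what bridges them, and the delicate point is showing that the low-margin region $B_\epsilon$ carries vanishing mass — which is precisely where the uniqueness-of-$\argmax$ hypothesis, and the fact that $F^T$ is held fixed while only the partitions vary, become essential.
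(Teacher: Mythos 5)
Your proof is correct, and it takes a genuinely different route from the paper's. Both arguments begin identically: task similarity is related to the agreement probability $Pr\big(h_n(X) = h^*_T(X)\big)$ of a cell-wise constant rule $h_n \in \mc{H}_{\mc{A}_n}$ (the paper gets equality using the agreement-maximizing label per cell; you get a lower bound using the risk-minimizing label, which suffices). The divergence is in how risk consistency is converted into vanishing disagreement. The paper argues that $R_n \to R^*$ forces $h_n \to h^*_T$ \emph{pointwise} (after a ``positive density'' reduction) and concludes by dominated convergence; you instead run a margin decomposition, bounding the disagreement probability by $F^T_X(B_\epsilon) + (R_n - R^*)/\epsilon$ and killing both terms via consistency and continuity from above. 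Your route buys rigor in two spots where the paper's argument is fragile: first, the paper's pointwise claim (``if $\lim h_n(x) \neq h^*_T(x)$ at any point then $\lim R_n > R^*$'') is not literally valid, since risk is insensitive to measure-zero modifications of $h_n$, whereas your argument only ever manipulates sets of positive $F^T_X$-measure; second, the paper's $h_n$ maximizes \emph{agreement} with $h^*_T$ per cell, and its risk need not equal the infimum $R_n$ appearing in the definition of consistency --- your choice of the risk minimizer $\hat h_n$, combined with the one-sided inequality $TS(F^T,F^S_n) \geq Pr\big(\hat h_n(X) = h^*_T(X)\big)$, sidesteps that mismatch cleanly. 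The margin argument also makes explicit exactly where the paper's standing assumption (uniqueness of the posterior argmax almost everywhere) enters: it is precisely what guarantees $F^T_X(\{m = 0\}) = 0$, so that the low-margin mass vanishes as $\epsilon \downarrow 0$.
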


In our setting, the closer optimal partitions get to one another (as defined by the risk of the optimal decision function in $ \mc{H}_{\mc{A}_{n}} $), the closer the distributions, and hence the higher the task similarity. If $\mc{A}$ is the partition induced by $F^T$, then in the limit  we can, loosely, think of the sequence of partitions yielding a subpartition of $\mc{A}$.

Let $ A \in \mc{A} $ for some partition $ \mc{A} $. We let $ diam(A) = \max ||x - x'|| $ for $ x, x' \in A $.
\begin{theorem}\label{thm:histogram-rules}
Let $ \mc{A}_{1}, \mc{A}_{2}, \hdots $ be a sequence of partitions. Suppose that $ \max_{A_{i_{n}}\in \mc{A}_{n}}  diam(A_{i_{n}}) \to 0 $ as $ n \to \infty $. For a fixed $ n $ let $ F^{S}_{n} \in  \mc{F}_{\mc{A}_{n}} $. Then for all $ F^{T} $
\begin{equation*}
    \lim_{n \to \infty} TS(F^{T}, F^{S}_{n}) = 1.
\end{equation*}

\end{theorem}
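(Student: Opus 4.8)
The plan is to show that the \emph{defect} $1 - TS(F^{T}, F^{S}_{n})$ is controlled by the $F^{T}_{X}$-mass lying near the decision boundary of $h^{*}_{T}$, which vanishes as the cells shrink. Since $F^{S}_{n} \in \mc{F}_{\mc{A}_{n}}$, its optimal partition is $\mc{A}_{n}$, so by \eqref{eq:task-similarity}
\[
TS(F^{T}, F^{S}_{n}) = \sum_{A \in \mc{A}_{n}} \max_{y} \mu_{y}(A), \qquad \mu_{y}(A) := \int_{A} \mbb{I}\{h^{*}_{T}(x) = y\}\, dF^{T}_{X}.
\]
Because $\{h^{*}_{T} = y\}_{y}$ partitions $\mc{X}$ up to a null set, $\sum_{y}\mu_{y}(A) = F^{T}_{X}(A)$ and $\sum_{A} F^{T}_{X}(A) = 1$; in particular $TS \le 1$. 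First I would rewrite the defect cell by cell,
\[
1 - TS(F^{T}, F^{S}_{n}) = \sum_{A \in \mc{A}_{n}} \Bigl( F^{T}_{X}(A) - \max_{y}\mu_{y}(A) \Bigr),
\]
and observe that the summand is the $F^{T}_{X}$-mass in $A$ carrying a non-majority Bayes label. Thus a \emph{pure} cell --- one on which $h^{*}_{T}$ is $F^{T}_{X}$-a.e.\ constant --- contributes $0$, while an \emph{impure} cell contributes at most $F^{T}_{X}(A)$.

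The next step is to localize the impure cells. Writing $R_{y} = \{x : h^{*}_{T}(x) = y\}$ and $\partial = \bigcup_{y} \partial R_{y}$ for the boundary of the Bayes decision regions, a connected cell $A$ that meets two regions $R_{y_1}, R_{y_2}$ cannot be split into two relatively open pieces, so it contains a point $z \in \partial$. With $\varepsilon_{n} := \max_{A \in \mc{A}_{n}} diam(A)$, every point of such an $A$ lies within $\varepsilon_{n}$ of $z$, hence $A \subseteq B_{\varepsilon_{n}} := \{x : \mathrm{dist}(x, \partial) \le \varepsilon_{n}\}$. Summing only over impure cells gives
\[
1 - TS(F^{T}, F^{S}_{n}) \le \sum_{A \text{ impure}} F^{T}_{X}(A) \le F^{T}_{X}(B_{\varepsilon_{n}}).
\]
Since $\varepsilon_{n} \to 0$, for every $\delta > 0$ we eventually have $B_{\varepsilon_{n}} \subseteq B_{\delta}$, and the closed neighborhoods $B_{\delta}$ decrease to $\overline{\partial}$ as $\delta \downarrow 0$; continuity of the finite measure $F^{T}_{X}$ from above then yields $\limsup_{n} F^{T}_{X}(B_{\varepsilon_{n}}) \le F^{T}_{X}(\overline{\partial})$.

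It remains --- and this is the crux --- to show $F^{T}_{X}(\overline{\partial}) = 0$. Here I would invoke the standing assumption that $\argmax_{y} Pr(Y = y \mid X = x)$ is unique $F^{T}_{X}$-almost everywhere: by continuity of the posteriors $x \mapsto Pr(Y = y \mid X = x)$, any boundary point $z \in \partial R_{y}$ is approached both from inside $R_{y}$ and from its complement, forcing a tie at the maximum, i.e.\ $z$ lies in the closed null set $N$ where the argmax is non-unique. Hence $\overline{\partial} \subseteq N$ and $F^{T}_{X}(\overline{\partial}) = 0$, giving $\lim_{n} TS(F^{T}, F^{S}_{n}) = 1$. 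The main obstacle is precisely this measure-of-the-boundary step: the connectedness localization and the continuity-of-measure passage are routine, but establishing $F^{T}_{X}(\overline{\partial}) = 0$ genuinely needs the regularity in the standing assumptions (a.e.-unique argmax together with continuity of the posteriors, so that the topological boundary sits inside the tie set); without such regularity the boundary could carry mass and the conclusion would fail. As a variant one can argue pointwise: for $F^{T}_{X}$-a.e.\ $x$ (those interior to their own Bayes region) the containing cell is eventually pure, so $\mbb{I}\{\tilde{h}_{n}(x) \ne h^{*}_{T}(x)\} \to 0$, and dominated convergence again forces the defect to $0$ --- a route that requires the same fact that the non-interior points form a null set.
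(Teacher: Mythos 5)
Your decomposition of the defect $1 - TS(F^{T},F^{S}_{n})$ into per-cell non-majority mass, and the localization of impure cells to an $\varepsilon_{n}$-neighborhood of the Bayes decision boundary (valid because the paper's cells are connected), are fine. The genuine gap is exactly where you flag it: the step $F^{T}_{X}(\overline{\partial}) = 0$. You justify it by ``continuity of the posteriors,'' presenting this as part of the paper's standing assumptions, but it is not: the paper assumes only that $\argmax_{y} Pr(Y=y \mid X=x)$ is unique $F^{T}_{X}$-a.e.\ and that $|\mc{A}^{*}|$ is countable. Under those assumptions alone your key claim is false. Take $\mc{X}=[0,1]$, $F^{T}_{X} = \tfrac{1}{2}\lambda + \tfrac{1}{2}\delta_{1/2}$ ($\lambda$ Lebesgue), $Y = \mbb{I}\{X \le 1/2\}$: the argmax is unique everywhere, the optimal partition $\{[0,1/2],(1/2,1]\}$ is finite with connected cells, yet $\partial = \{1/2\}$ carries mass $1/2$, so $F^{T}_{X}(B_{\varepsilon_{n}}) \ge 1/2$ for all $n$ and your bound gives nothing. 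Note also that your closing remark --- that without such regularity ``the conclusion would fail'' --- is incorrect: in this example the theorem still holds, since the cell containing the atom eventually assigns the atom its (majority) label and that cell's defect is at most its Lebesgue part, of order $\varepsilon_{n}$. What fails is only your inequality ``defect $\le$ mass near the boundary,'' which is fundamentally too lossy: mass near (or on) the boundary is not automatically misclassified mass.

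The paper's proof sidesteps topology entirely: it observes that decision rules measurable with respect to partitions whose maximal cell diameter vanishes are histogram rules, invokes universal consistency of histogram rules (the approximation error $\inf_{h \in \mc{H}_{\mc{A}_{n}}} R_{F^{T}}(h) \to R^{*}_{F^{T}}$ holds for \emph{every} distribution, atoms and irregular boundaries included), and then applies Theorem \ref{thm:consistent_classifier}. If you want to keep a self-contained direct argument in your style, replace the boundary-mass bound by the $L^{1}$ fact underlying histogram consistency: for each of the finitely many labels $y$, the cell-averaged function $x \mapsto F^{T}_{X}(A_{n}(x) \cap \{h^{*}_{T}=y\})/F^{T}_{X}(A_{n}(x))$ converges in $L^{1}(F^{T}_{X})$ to $\mbb{I}\{h^{*}_{T}(x)=y\}$ when $\max_{A \in \mc{A}_{n}} diam(A) \to 0$ (prove it for continuous functions via the modulus of continuity, then extend by density of continuous functions in $L^{1}$ of a finite Borel measure, using that conditional averaging is an $L^{1}$ contraction). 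Since the defect equals $\int \bigl(1 - \max_{y} F^{T}_{X}(A_{n}(x) \cap \{h^{*}_{T}=y\})/F^{T}_{X}(A_{n}(x))\bigr)\, dF^{T}_{X}$ and the maximum over finitely many labels is $L^{1}$-continuous, this yields the theorem with no regularity beyond the paper's assumptions.
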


This is akin to a universal consistency result. Building on the previous theorem, here is a specific example of partitions which always yield a subpartition of the optimal partition for $ F^{T} $. The result is just but another perspective of ``histogram rules yield consistent decision rules" \citep{devroye2013probabilistic}[Chapter 6].

\subsection{Illustrative Examples}\label{subsec:illustrative}
We consider 4 distributions defined on $ [-1,1]^{2} $: XOR, Quadrants (Quads), Rotated-XOR (R-XOR), and Finer-XOR (F-XOR). Recall that XOR is the two class classification problem where, conditioned on being in class 0, a pattern is distributed uniformly on the positive (+, +) and negative quadrants (-, -) and where, conditioned on being in class 1, a pattern is distributed uniformly on the mixed quadrants (+, -) and (-, +). Quads is a four class distribution where classes are separated via the coordinate axes. Conditioned on being in a particular class a pattern is distributed uniformly in the quadrant. R-XOR is the same as XOR except the class conditional distributions are ``rotated" 45 degrees. F-XOR is the XOR problem in each of the four quadrants. Samples from these four distributions are shown in the top row of Figure \ref{fig:analytic}. 

The pairwise adjusted task similarities of these four distributions cover four cases that help develop an intuition, described in Appendix \ref{app:illustrative}, underlying (adjusted) task similarity and the definitions of adversarial and orthogonal tasks.


\begin{figure*}[!ht]
    \centering
    \includegraphics[width=\linewidth]{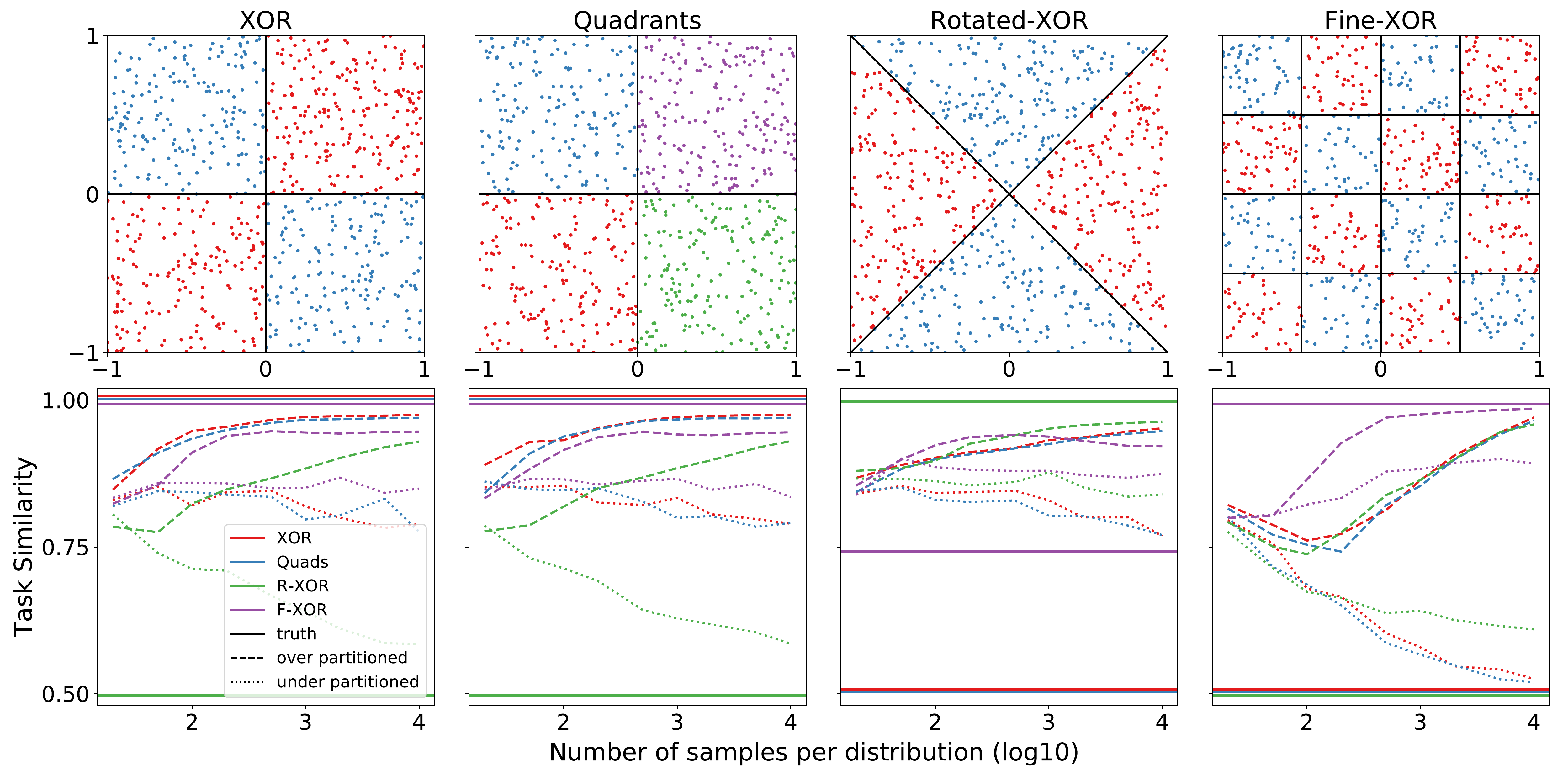}
    \caption{Pairwise analytic and empirical task similarities of XOR, Quads, R-XOR and F-XOR. 750 samples from each of the four distributions are shown in the top row. In particular, the column with XOR distribution in the top row corresponds to the (empirical and true) task similarity of XOR as the target task. Empirical task similarity was estimated using modified decision trees from \texttt{sci-kit learn} \citep{pedregosa2011scikit}.
    The task similarities of these four distributions demonstrate important properties of analytical and empirical task similarity (bottom row).
    }
    \label{fig:analytic}
 \end{figure*}

\section{Empirical Task Similarity}\label{sec:empirical-task-dissimilarity} Let
\begin{align*}
    (X_{1}, Y_{1}, T_{1}), \hdots, (X_{n}, Y_{n}, T_{n}) \iid F^{transfer}
\end{align*} be as described in Section \ref{subsec:transfer-learning}.

Task similarity, as defined by equation \eqref{eq:task-similarity}, cannot be measured directly because it depends on the true but unknown optimal partitions $ \mc{A}^{*} $ and $ \mc{B}^{*} $. To measure task similarity $ TS(F^{T}, F^{S}) $, then, we learn two composeable decision functions $ h^{S} = w^{S} \circ v^{S} \circ u^{S}  $ using $ \mc{D}_{n}^{S} $ and $ h^{T} = w^{T} \circ v^{T} \circ u^{T} $ using $ \mc{D}_{n}^{T} $. We let $ n^{T} $ denote the number of samples from the target distribution and $ w^{S,T} \circ v^{S,T} $ be the classifier learned using data from the target distribution after transforming it into $ \tilde{\mc{X}}^{S} $ using the transformer learned from the source data. Task similarity can then be measured by normalizing the number of agreements between $ w^{S,T} \circ v^{S,T} \circ u^{S} $ and $ w^{T} \circ v^{T} \circ u^{T} $ on $ \mc{D}^{T}_{n} $:
\begin{align}\label{eq:empirical task similarity}
    ETS(F^{T}, F^{S})
    = \frac{1}{n^{T}} \sum_{(X_{i}, Y_{i}, T_{i}) \in \mc{D}_{n}^{T}} \mbb{I}\{w^{T} \circ v^{T} \circ u^{T}(X_{i}) = w^{S,T} \circ v^{S,T} \circ u^{S}(X_{i})\}.
\end{align}

In general, $ ETS $ is \textbf{not} consistent for $ TS $. Indeed, for a fixed number of source samples $ n^{S} $, $ETS(F^{T}, F^{S})$ is consistent for $ TS(F^{T}, F^{S}_{n^{S}}) $ where $ F^{S}_{n^{S}} $ is in the set of distributions with optimal decision boundary the same as that induced by $  h^{S}_{n} $. Hence,
popular decision rules, such as $ K $ Nearest Neighbors \citep{fix1951discriminatory, stone1977consistent} with $ K \to \infty $ as a function of $ n $, decision trees \citep{amit1997shape} and forests \citep{breiman2001random} with a minimum depth that grows with $ n $, and artificial neural networks \citep{rosenblatt1958perceptron, hornik1991approximation} with a capacity that grows with $ n $ overpartition $ \mc{X} $ (or $ |\mc{A}_{n}| \gg |\mc{A}^{*}| $) and typically will not result in a consistent estimate of $ TS $. We demonstrate the effect of the choice of decision rule on empirical task similarity in the bottom row of Figure \ref{fig:analytic}. In general, when both algorithms overpartition the space the proposed estimate of task similarity tends towards 1. This corroborates Theorem \ref{thm:histogram-rules}. 

We note that the estimate of $ TS $ tending towards 1 does not directly translate to an effective partition (in terms of minimizing risk for the target distribution) for a given set of data. Indeed, overpartitioning can result in performance degradation when the true-but-unknown entropy in a cell is high relative to the amount of target data available.

Adjusted task similarity is inherently harder to estimate than task similarity, as it requires estimating the existence of different maximizers of the integrand in Equation \eqref{eq:task-similarity} per cell. We do not pursue any discussion of empirical adjusted task similarity.

\subsection{Assessing the utility of source distributions and pre-trained models}

\begin{figure}[t]
    \centering
    \includegraphics[width=.8\linewidth]{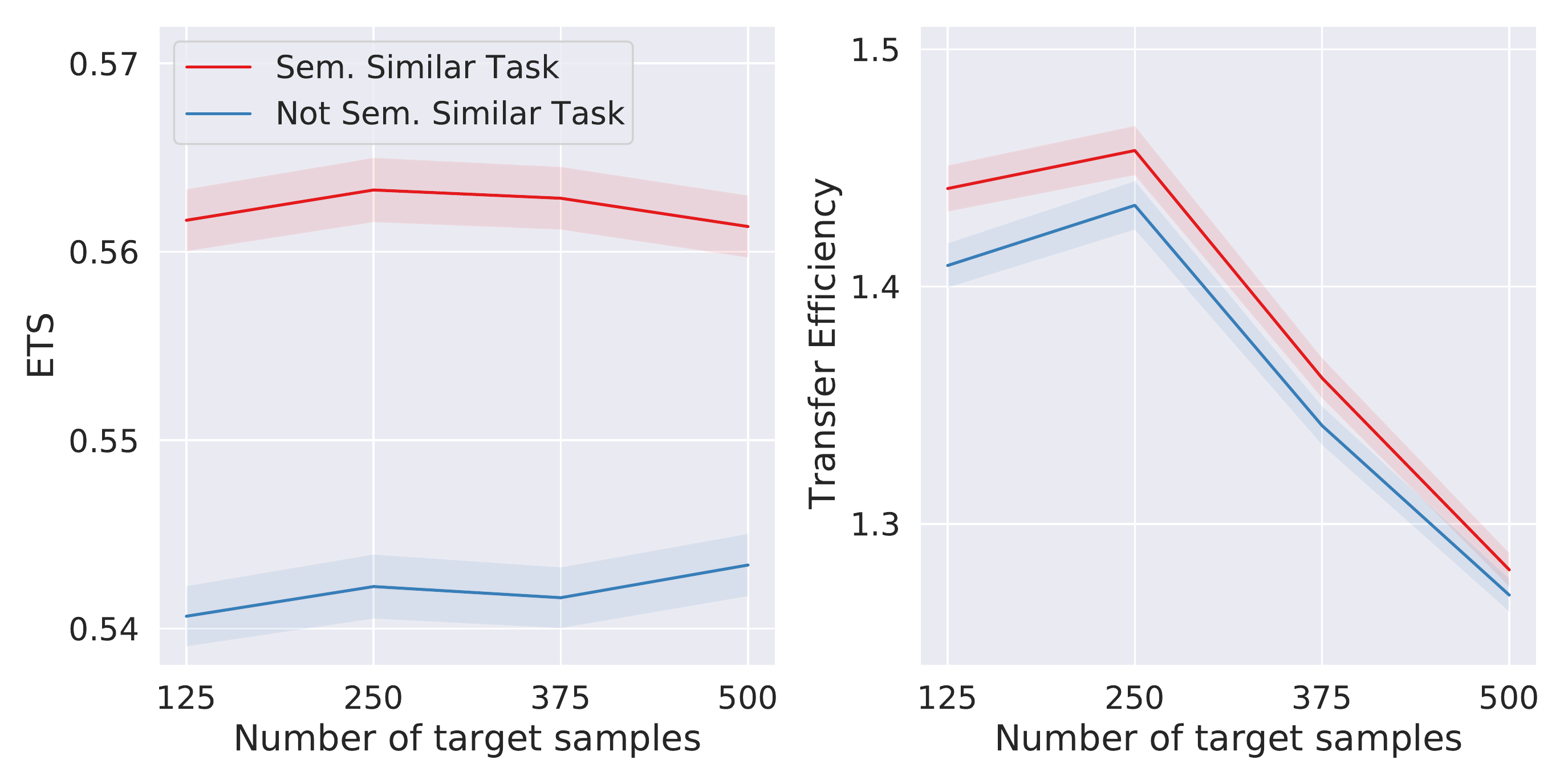}
    \caption{Empirical task similarity can be used to determine which pre-trained model to fine tune for a target task. In particular, semantically similar tasks both have a higher empirical task similarity (left) and pre-trained models trained on semantically similar tasks and subsequently fine tuned to the target task result in higher accuracy on the target task (right). Average baseline accuracies, i.e. training a model from scratch using data from the target distribution, used to measure transfer efficiency are $ 0.35, 0.42, 0.49 $ and $ 0.54 $ for $ n = 125, 250, 375 $ and $ 500 $, respectively.
    }
    \label{fig:cifar}
\end{figure}

In this section we demonstrate that empirical task similarity can be used to rank the efficacy of different models trained on different auxiliary datasets in an image classification setting and to identify etymologically similar languages in a language detection task. Assume that we have access to data from a target distribution, data from $ J $ different source (or auxiliary) distributions, and $ J $ composeable decision functions (one for each auxiliary distribution).
Our goal is to choose the auxiliary composeable decision function that, when adjusted to the target task, maximizes the accuracy on the target task. 

\subsubsection{CIFAR100}\label{sec:CIFAR}

We first investigate this problem in the context of image classification (CIFAR100 \citep{Krizhevsky09learningmultiple}).
Recall that for the CIFAR100 dataset there exists two labels, a coarse label and a fine label, for each input pattern.
In our setting, we construct a target distribution then subsequently construct two candidate source distributions: one that is ``semantically similar" to the target distribution, one that is ``semantically dissimilar" to the target distribution. 
All distributions contain five non-overlapping classes.

To construct the target distribution we first randomly select five coarse labels.
We then randomly select a fine label from the set of fine labels corresponding to each of the coarse labels (i.e. the lion fine label from the large carnivore coarse label).
The source distribution that is ``semantically similar" to the target task is then constructed by sampling (without replacement) a single distribution from each of the sets corresponding to the coarse labels sampled for the target distribution.
That is, for example, the target distribution and the ``semantically similar" auxiliary distribution both contain a class from within the large carnivore, small mammal, flowers, people and reptile coarse labels.
The source distribution that is ``semantically dissimilar" to the target task is constructed by randomly sampling fine labels from coarse labels not sampled for the target task.

Figure \ref{fig:cifar} shows the results of a Monte Carlo simulation used to evaluate the task similarity (left) and transfer efficiency of pretraining on the various source distributions (right).
At every replication of the simulation, new target and source distributions are randomly generated from CIFAR100.
Then a small convolutional neural network is trained on each of the source datasets, as well as on the target dataset to establish an accuracy baseline (see Appendix \ref{app:network} for architecture and training details).
Finally, each of the convolutional networks trained on the source datasets is fine-tuned on the target dataset.
The task similarity and transfer efficiency resulting from pretraining are recorded and shown in Figure \ref{fig:cifar}.
The error bars on the figure denote 90\% confidence intervals for the mean.

Note that the ``semantically similar" source distribution has significantly higher task similarity than the ``semantically dissimilar" source distribution for all numbers of target samples, and significantly better transfer efficiency for all shown amounts of target samples except $ n = 500$. This result indicates that empirical task similarity can be used to select a pre-trained model to use for transfer.

\subsubsection{Language detection}

For the task of language detection, i.e. determining the language of a sentence, we trained an unsupervised skipgram subword embedding on the Tatoeba multi-language dataset \citep{TIEDEMANN12.463} using an implementation from \texttt{fasttext} \citep{bojanowski2017enriching}. Sentence embeddings were generated by first tokenizing words into tokens, normalizing each token for each sentence, and taking the sum of the normalized tokens. For purposes of measuring empirical task similarity, we sampled 10,000 sentences from each of twenty-nine languages. If a language had less than 10,000 sentences we included the entire collection.  

We then selected four pairs of languages of interest, totaling eight languages. Each pair are ``etymologically" similar --  for example, Portuguese and Spanish share a recent common parent language. Once the pairs were selected, we then combined the remaining twenty-one languages into a ``negative" language class. For each language in at least one selected pair, we constructed a two-class classification task by pairing each of the languages with the negative class. We then measured the pairwise empirical task similarity between each of these classification tasks.

Figure \ref{fig:lang} shows the pairwise empirical task similarity between each of the selected languages. For all proportion of samples from the target task, the ranking of the empirical task similarities is etymologically meaningful -- i.e. Japanese is closest to Chinese Mandarin (and vice versa), Portugese is closest to Spanish (and vice versa), etc.

\begin{figure}
    \centering
    \includegraphics[width=\textwidth]{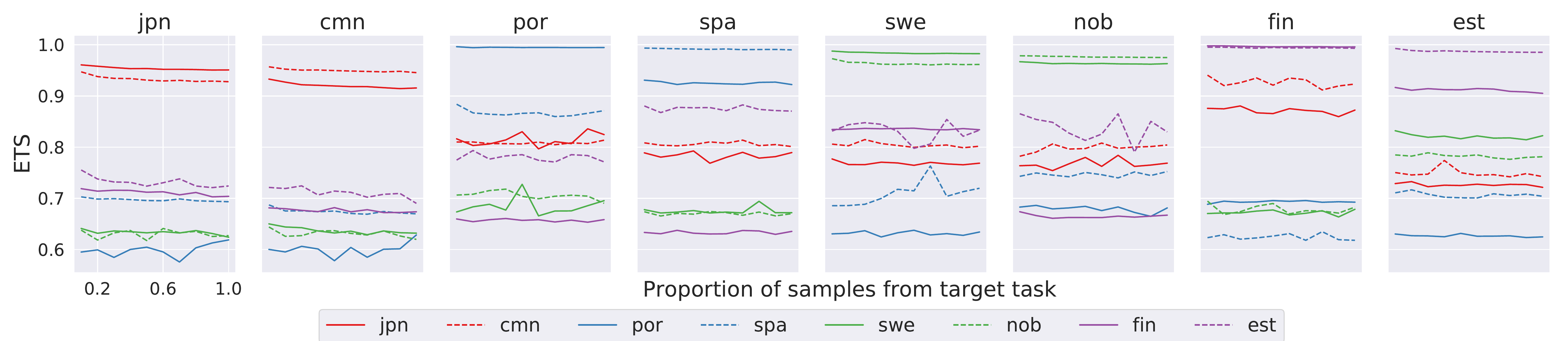}
    \caption{Empirical task similarity (ETS) of different languages as measured through the lens of a language detection task. ETS appears to be a useful proxy for language similarity.}
    \label{fig:lang}
\end{figure}
\label{method}

\label{simulations}


\section{Discussion}
The definition of task similarity is principled from the perspective of statistical pattern recognition. Further, it being a function of optimal partitions ties it closely to popular modern machine learning algorithms \citep{Priebe2020.04.29.068460}. Indeed, the analysis of Section \ref{sec:properties} elucidates the utility of algorithms such as deep neural networks, random forests, and $ K$-nearest neighbors that learn representations of the input space that are more than sufficient for learning an optimal classifier for a given distribution.

Empirical task similarity may be useful from the perspective of machine learning practitioners. 
For a particular task induced by the CIFAR-100 coarse labels, the ranking the empirical task similarity of different auxilliary data correlates with the ranking of the accuracy of transfer models corresponding to the different auxiliary datasets. Hence, we have proposed a promising solution to the problem of identifying useful datasets and models before training begins for the target task. 

Extensions of this similarity to distributions defined on different input spaces (i.e. $ \mc{X}^{S} = \mbb{R}^{d} $ and $ \mc{X}^{T} = \mbb{R}^{d'} $) is an important and natural next research objective and likely requires including an optimization over a non-discrete set of projections from $ \mbb{R}^{d} $ to $ \mbb{R}^{d'} $.

Another extension of the work herein is to formally describe the set of similarities and dissimilarities on the set of classification distributions that are defined similarly to TS. In particular, the integrand of Equation \eqref{eq:task-similarity} is but one way to measure sameness. In this direction, it seems gainful to investigate the relationship between $ f $-divergences and the proposed task similarity.

Finally, we note that theoretical notions of similarity can likely be extended to different inference tasks including regression and ranking.






\label{discussion}

\clearpage
\bibliographystyle{iclr2021/iclr2021_conference}
\bibliography{biblio}

\clearpage

\appendix
\section{Proofs of Theorems  \ref{thm:ats=1=ats}, \ref{thm:non-decreasing}, \ref{thm:consistent_classifier} and \ref{thm:histogram-rules}}\label{app:theorems}

\subsection*{Theorem \ref{thm:ats=1=ats}}
\begin{proof}
    Assume $F^T, F^S \in \mc{F}_{A}$. Thus $\mc{A}$ is the optimal partition of both $F^T$ and $F^S$. 
    
    Let $F^T$ have $k^T$ classes and let $F^S$ have $k^S$ classes. We have by definition of adjusted task similarity 
    \begin{align*}
        ATS(F^T, F^S) = \sum_{A \in \mc{A}}\max_{y \in \{1, \dots, k^T\}}\int_{A} 
        \mbb{I}\{h^{*}_T(x) = y\}
        \cdot
        \mbb{I}\{|\psi_{A}^{T}| = 1\}
        dF^{T}_{X}.
    \end{align*}
    Examining the term $\mbb{I}\{|\psi_{A}^{T}| = 1\}$,
    \begin{align*}
         \mbb{I}\{|\psi_{A}^{T}| = 1\} = \mbb{I}\{|\argmax_{y \in \{1, \hdots, k^{T}\}} \int_{A}
    \mbb{I}\{h_{T}^{*}(x) = y\} dF^{T}_{X}| = 1\}
    \end{align*}
    Since $\mc{A}$ is the optimal partition for $F^T$, we have that $h_T^*$ (the optimal hypothesis) is constant on $A \in \mc{A}$, say $h_T^* = y_A$. The integral above by is maximized by taking $y$ to be $y_A$, as any other $y$ would make it zero. Thus the argmax always has only one element, $y_A$, and the indicator is always equal to one. Thus 
    \begin{align*}
        ATS(F^T, F^S) & = \sum_{A \in \mc{A}}\max_{y \in \{1, \dots, k^T
        \}}\int_{A} 
        \mbb{I}\{h^{*}_T(x) = y\}
        \cdot
        \mbb{I}\{|\psi_{A}^{T}| = 1\}
        dF^{T}_{X}\\
        &= \sum_{A \in \mc{A}}\max_{y \in \{1, \dots, k^T\}}\int_{A} \mbb{I}\{h^{*}_T(x) = y\} dF^{T}_{X}
    \end{align*}
    By the same argument again, as $\mc{A}$ is the optimal partition for $F^T$ and hence $h^*_T$ is constant on $A$, we get that the indicator is always $1$ as $y$ is taken to be $y_A$, which is the constant value that $h_T^*$ takes on in $A$. Thus 
     \begin{align*}
        TS(F^T, F^S) & = 
        \sum_{A \in \mc{A}}\max_{y \in \{1, \dots, k^T\}}\int_{A} \mbb{I}\{h^{*}_T(x) = y\} dF^{T}_{X} \\
        & = \sum_{A \in \mc{A}}\int_{A}dF^T_X \\
        & = 1
    \end{align*}
    Since $\mc{A}$ is a partition for $\mc{X}$. Thus $ATS(F^T, F^S) = 1$. By symmetry, $ATS(F^S, F^T) = 1$ also, and so the desired result.
\end{proof}

\subsection*{Theorem \ref{thm:non-decreasing}}

\begin{proof}
   Let $\mc{B}$ be a subpartition of $\mc{A}$ and let $F_{\mc{A}} \in \mc{F}_{\mc{A}}$ and $F_{\mc{B}} \in \mc{F}_{\mc{B}}$. Let $F^T$ be an arbitrary distribution on $\mc{X}\times\{1, \dots, k\}$. 
   
   Consider now $\Delta = TS(F^T, F_{\mc{B}}) - TS(F^T, F_{\mc{A}})$. We would like to show this quantity is greater than or equal to zero, and we get the desired result.
   
   Since $\mc{B}$ is a subpartition of $\mc{A}$, then every element of $\mc{A}$ is the union of the elements of some subset of $\mc{B}$. Thus let $A \in \mc{A}$ be arbitrary, and let $\mc{B}_{A}$ be such that $A = \bigcup_{B \in \mc{B}_A}B$. Then,
   \begin{align*}
    \sum_{B \in \mc{B}_A}
    \max_{y \in \{1, \dots, k^T\}}
    \int_{B} 
        \mbb{I}\{h^{*}_T(x) = y\}
        dF^{T}_{X}
        -
    \max_{y \in \{1, \dots, k^T\}}
    \int_{A} 
        \mbb{I}\{h^{*}_T(x) = y\}
        dF^{T}_{X}.
  = \\      
   \sum_{B \in \mc{B}_A}
    \max_{y \in \{1, \dots, k^T\}}
    \int_{B} 
        \mbb{I}\{h^{*}_T(x) = y\}
        dF^{T}_{X}
        -      
    \max_{y \in \{1, \dots, k^T\}}
    \sum_{B \in \mc{B}_A}
    \int_{B} 
        \mbb{I}\{h^{*}_T(x) = y\}
        dF^{T}_{X}
   \end{align*}
   Hence the difference between $TS(F^T, F_\mc{B})$ and $TS(F^T, F_{\mc{A}})$ is simply moving the maximum inside the sum, i.e. maximizing each integral separately rather than maximizing them all simultaneously. Now unless the same $y$ maximizes each integral, in which case the expressions are equal, maximizing the sum yields a quantity smaller than maximizing each summand separately then summing. Hence,
   \begin{align*}
        \sum_{B \in \mc{B}_A}
    \max_{y \in \{1, \dots, k^T\}}
    \int_{B} 
        \mbb{I}\{h^{*}_T(x) = y\}
        dF^{T}_{X} 
        - 
        \max_{y \in \{1, \dots, k^T\}}
    \sum_{B \in \mc{B}_A}
    \int_{B} 
        \mbb{I}\{h^{*}_T(x) = y\}
        dF^{T}_{X} \geq 0
   \end{align*}
   Now summing over each $A \in \mc{A}$,
    \begin{align*}
        \sum_{A \in \mc{A}}\sum_{B \in \mc{B}_A}
    \max_{y \in \{1, \dots, k^T\}}
    \int_{B} 
        \mbb{I}\{h^{*}_T(x) = y\}
        dF^{T}_{X} 
        -
        \sum_{A \in \mc{A}}\max_{y \in \{1, \dots, k^T\}}
    \sum_{B \in \mc{B}_A}
    \int_{B} 
        \mbb{I}\{h^{*}_T(x) = y\}
        dF^{T}_{X} \geq 0
   \end{align*}
   Since $\mc{B}$ subpartition of $\mc{A}$ implies that $\mc{B} = \bigcup_{A \in \mc{A}}\mc{B}_{\mc{A}}$. Hence we get
   \begin{align*}
     TS(F^T, F_{\mc{B}}) - TS(F^T, F_{\mc{A}}) \geq 0
   \end{align*}
   Which is the desired result.
\end{proof}

\subsection*{Theorem \ref{thm:consistent_classifier}}

\begin{proof}
    Assume the conditions in the theorem. Consider the task similarity $TS(F^T, F^S_n)$,
    \begin{align*}
         TS(F^{T}, F^{S}) = 
    \sum_{A_{S} \in \mc{A}_S^*}
    \max_{y \in \{1, \dots, k^T\}}
    \int_{A_{S}}
    \mbb{I}\{h_{T}^{*}(x) = y\} dF^{T}_{X}.
    \end{align*}
    In the above integral, we pick some $y = y_{A_S}$ such that $y_{A_S}$ maximizes $\int_{A_{S}}\mbb{I}\{h_{T}^{*}(x) = y\} dF^{T}_{X}$. In essence we have just defined $h_n$ where $h_n(x) = y_{A_S}$ whenever $x \in A_S$.In other words, $h_n$ takes on the constant value $y_{A_S}$ on each cell $A_S \in \mc{A}_S$. The task similarity then turns into
    \begin{align*}
         TS(F^{T}, F^{S}) & = 
    \sum_{A_{S} \in \mc{A}_S^*}
    \max_{y \in \{1, \dots, k^T\}}
    \int_{A_{S}}
    \mbb{I}\{h_{T}^{*}(x) = y\} dF^{T}_{X} \\
    & = \sum_{A_S \in \mc{A}_S}\int_{A_S}\mbb{I}(h_T^*(x) = h_n(x))dF^T_X \\
    & = \int_{\mc{X}}\mbb{I}\{h_T^*(x) = h_n(x)\}dF^T_X
    \end{align*}
    Let $R_n = R_{F^T}(h_n)$ be the risk of $h_n$ and $R^* = R^*_{F^T}$ be the Bayes risk for $F^T$. $\mc{A}_1, \mc{A}_2, \dots$ inducing a consistent decision rule implies that
    \begin{align*}
       R_n \to R^*
    \end{align*}
    Or (recall we are assuming 0-1 loss)
    \begin{align*}
        \int_{\mc{X}}\sum_{y \in \{1, \dots, k^T\}}\mbb{I}(h_n(x) \neq y)p(y)dF_X^T \to \int_{\mc{X}}\sum_{y \in \{1, \dots, k^T\}}\mbb{I}(h^*_T(x) \neq y)p(y)dF_X^T
    \end{align*}
    as $n \to \infty$. Without loss of generality, we assume all $x \in \mc{X}$ have positive density/probability (simply throw out the points with zero probability/density). 
    
    Since $R_n \to R^*$, and $R^*$ is the infimum of the risk of all decision rules, and we are assuming $p(y|x)$ is uniquely maximized by some $y$, and all the points $x \in \mc{X}$ have positive density, then we must have that $h_{n} \to h_{T}^{*}$ pointwise. 
    
    First, if $\lim h_n(x)$ does not exist, then $\lim R_n$ does not exist, which contradicts our assumption. Thus $\lim h_n(x)$ exists for all $x$.
    
    Next, if for any point $x$ we have that $\lim_{n\to\infty}h_n(x) \neq h_T^*(x)$, then $\lim_{n\to\infty}R_n > R^{*} $. This is because $x$ has positive density, and if $\lim h_n \neq h_T^*$, then by uniqueness of the $y$ that maximizes $p(y|x)$ and $h^*_T$ being the Bayes decision rule (hence lowest risk), we get a risk that is bigger than the Bayes risk in the limit, contradicting our assumption that $R_n \to R^*$. Thus $h_n \to h_T^*$ pointwise as claimed.
    
    Going back to task similarity, we thus have that
    \begin{align*}
        \lim_{n\to\infty} TS(F^T, F^S_n) & = \lim_{n\to\infty}\int_{\mc{X}}\mbb{I}(h_T^*(x) = h_n(x)\}dF^T_X \\
        & = \int_{\mc{X}}\lim_{n\to\infty}\mbb{I}(h_T^*(x) = h_n(x)\}dF^T_X \\
        & = \int_{\mc{X}}\mbb{I}(h_T^*(x) = h_T^*(x)\}dF^T_X \\
        & = \int_{\mc{X}}dF^T_X \\
        & = 1
    \end{align*}
    Where the interchange of the limit and the integral is justified by the dominated convergence theorem. Thus,
    \begin{align*}
         \lim_{n\to\infty} TS(F^T, F^S_n) = 1
    \end{align*}
    as desired.
\end{proof}

\subsection*{Theorem \ref{thm:histogram-rules}}
\begin{proof}
    Let $ \mc{A}_{1}, \mc{A}_{2}, \hdots $ be a sequence of partitions such that $ \max_{A_{i_{n}}\in \mc{A}_{n}}  diam(A_{i_{n}}) \to 0 $ as $ n \to \infty $. Then any sequence any $(h_n)_{n = 1}^{\infty}$ such that $h_n \in \mc{H}_{\mc{A}_n}$ is a histogram rule. Let $F^T$ be an arbitrary distribution on $\mc{X}\times\{1, \dots, k\}$. Since histogram rules are universally consistent, we have that there must exist a specific sequence $(h_n)_{n = 1}^{\infty}$ such that $R(h_n) \to R^*$, a $n \to \infty$, where $R^*$ is the Bayes risk associated with $F^T$. Thus $(\mc{A}_n)_{n = 1}^{\infty}$ induces a consistent decision rule for $F^T$. By theorem \ref{thm:consistent_classifier}, $\lim_{n \to \infty}TS(F^T, F^S_n) = 1$ whenever $F^S_n \in \mc{F}_{\mc{A}_n}$, as desired.
\end{proof}

\begin{theorem}
$ATS(F, G) \le TS(F,G)$ for all $ F, G $ where $ F, G $ are classification distributions.
\end{theorem}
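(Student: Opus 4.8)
The plan is to compare the two quantities cell-by-cell over the optimal partition $\mc{A}_G^*$ of the source distribution $G$, since this is precisely the index set of the outer sum in both $TS(F,G)$ and $ATS(F,G)$. The only difference between the two summands is the extra factor $\mbb{I}\{|\psi_A^F| = 1\}$ appearing inside the integral defining $ATS$. The key observation is that this factor does not depend on the integration variable $x$: for a fixed cell $A \in \mc{A}_G^*$, the cardinality $|\psi_A^F|$ is a property of the whole cell, so $\mbb{I}\{|\psi_A^F| = 1\}$ is a constant (either $0$ or $1$) on $A$, and can therefore be pulled outside both the integral and the maximization over $y$.

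First, I would fix a cell $A \in \mc{A}_G^*$ and split into two cases according to the value of this constant. If $|\psi_A^F| = 1$, the factor equals $1$ and the $ATS$ summand for $A$ is identical to the $TS$ summand for $A$. If instead $|\psi_A^F| > 1$, the factor equals $0$; since it is a nonnegative constant, pulling it out of the maximum gives
\begin{align*}
\max_{y \in \{1, \dots, k^{T}\}} \int_{A} \mbb{I}\{h_F^*(x) = y\} \cdot \mbb{I}\{|\psi_A^F| = 1\}\, dF_X = 0,
\end{align*}
so the $ATS$ summand for $A$ vanishes. Meanwhile the corresponding $TS$ summand is $\max_{y} \int_A \mbb{I}\{h_F^*(x) = y\}\, dF_X \geq 0$, because the integrand is nonnegative and the measure $F_X$ is nonnegative. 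Hence in both cases the $ATS$ summand is at most the $TS$ summand.

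Finally, I would sum this cell-wise inequality over all $A \in \mc{A}_G^*$. Since $TS(F,G)$ and $ATS(F,G)$ are exactly these two sums, term-by-term domination immediately yields $ATS(F,G) \le TS(F,G)$, which is the claim.

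There is no substantial obstacle here; the argument is essentially term-wise nonnegativity. The only point requiring a little care is the justification for moving the constant $\mbb{I}\{|\psi_A^F| = 1\}$ outside the maximum, which relies on the elementary fact that scaling a finite family of real numbers by a common nonnegative constant commutes with taking their maximum. Everything else follows from nonnegativity of the integrands, together with the fact that $|\psi_A^F| \ge 1$ always holds, so that the adjustment indicator is genuinely binary.
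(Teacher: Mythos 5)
Your proof is correct and follows essentially the same route as the paper's: a cell-by-cell comparison over $\mc{A}_G^*$, using the fact that the $ATS$ integrand is just the $TS$ integrand multiplied by the indicator $\mbb{I}\{|\psi_{A}^{F}| = 1\}$, which is constant on each cell and lies in $\{0,1\}$, so each $ATS$ summand is dominated by the corresponding $TS$ summand and summing finishes the argument. Your case split and the explicit justification for pulling the constant out of the maximization are actually slightly more careful than the paper's one-line version, whose displayed inequality even has the direction reversed (an evident typo).
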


\begin{proof}
   Let $F^S, F^T$ be arbitrary distributions respectively on $\mc{X}\times\{1, \dots, k^S\}$ and $\mc{X}\times\{1, \dots, k^T\}$. Consider $ATS(F^T, F^S)$ and $TS(F^T, F^S)$. Since
   \begin{align*}
       \int_{A_S} \mbb{I}\{h^{*}_T(x) = y\}\mbb{I}\{|\psi_{A_S}^{T}| = 1\} dF^{T}_{X} \geq \int_{A_S} \mbb{I}\{h^{*}_T(x) = y\}dF^{T}_{X} 
   \end{align*}
   For all $A_S \in \mc{A}_S$ where $\mc{A}_S$ is the optimal partition induced by $F^S$, we have that by summing over $A_S$, we get $TS(F^T,F^S) \geq ATS(F^T, F^S)$, as desired.
\end{proof}

\begin{theorem}
Let $\mc{A}$ be a partition on $\mc{X}$, and let $F^T, F^S \in \mc{F}_{\mc{A}}$, then $TS(F^T, F^S) = TS(F^S, F^T) = 1$.
\end{theorem}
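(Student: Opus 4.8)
The plan is to recognize that this statement is the $TS$-analogue of Theorem~\ref{thm:ats=1=ats}, and that it follows almost immediately by sandwiching between results already in hand. First I would record the trivial upper bound $TS(F^T, F^S) \le 1$: since the integrand $\mbb{I}\{h^{*}_T(x) = y\}$ never exceeds $1$, each cell contributes at most $\int_{A} dF^{T}_{X} = F^{T}_{X}(A)$, so summing over the partition gives $\sum_{A \in \mc{A}} F^{T}_{X}(A) = F^{T}_{X}(\mc{X}) = 1$. Combining this with the inequality $ATS(F^T, F^S) \le TS(F^T, F^S)$ from the immediately preceding theorem, and with the identity $ATS(F^T, F^S) = 1$ supplied by Theorem~\ref{thm:ats=1=ats}, the chain $1 = ATS(F^T, F^S) \le TS(F^T, F^S) \le 1$ forces $TS(F^T, F^S) = 1$.

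Alternatively, I would give a direct computation mirroring the argument used for Theorem~\ref{thm:ats=1=ats}. Because $F^T \in \mc{F}_{\mc{A}}$, the partition $\mc{A}$ is optimal for $F^T$, so the Bayes rule $h^{*}_T$ is constant on each cell $A \in \mc{A}$; denote this constant value by $y_A$. For each cell, the inner maximization $\max_{y} \int_{A} \mbb{I}\{h^{*}_T(x) = y\} dF^{T}_{X}$ is then attained at $y = y_A$, where the indicator equals $1$ throughout $A$ and the integral equals $F^{T}_{X}(A)$; any other label yields the strictly smaller value $0$, since $h^{*}_T$ never takes that value on $A$. Summing over the cells and using that $\mc{A}$ partitions $\mc{X}$ gives $TS(F^T, F^S) = \sum_{A \in \mc{A}} F^{T}_{X}(A) = 1$.

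Finally, the symmetric claim $TS(F^S, F^T) = 1$ follows by the identical argument with the roles of $S$ and $T$ exchanged: the hypothesis $F^S \in \mc{F}_{\mc{A}}$ guarantees that $\mc{A}$ is also the optimal partition for $F^S$, so $h^{*}_S$ is constant on each cell and the same computation returns $1$. I do not expect a genuine obstacle here; the only point requiring care is the justification that $h^{*}_T$ (respectively $h^{*}_S$) is constant on each cell of $\mc{A}$, which is precisely the defining property of the optimal partition, together with the observation that the cell integral is maximized by the label $h^{*}_T$ assigns to that cell.
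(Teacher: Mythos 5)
Your proposal is correct, and your second (direct) argument is essentially identical to the paper's own proof: both use the fact that $F^T \in \mc{F}_{\mc{A}}$ makes $h^{*}_T$ constant on each cell $A$, so the inner maximum is attained at that constant label $y_A$ with value $F^{T}_{X}(A)$, and summing over the partition gives $1$, with the symmetric claim following by exchanging $S$ and $T$. Your first (sandwich) argument, chaining $1 = ATS(F^T,F^S) \le TS(F^T,F^S) \le 1$ via Theorem~\ref{thm:ats=1=ats} and the $ATS \le TS$ comparison, is a legitimate alternative within the paper's logical ordering, though it saves no real work: the paper's proof of Theorem~\ref{thm:ats=1=ats} itself reduces to and carries out exactly this $TS$ computation, so the sandwich merely reuses the same calculation indirectly.
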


\begin{proof}
   Assume $F^T, F^S \in \mc{F}_{A}$. Thus $\mc{A}$ is the optimal partition of both $F^T$ and $F^S$. 
    
    Let $F^T$ have $k^T$ classes and let $F^S$ have $k^S$ classes. We have by definition of task similarity 
    \begin{align*}
        TS(F^T, F^S) = \sum_{A \in \mc{A}}\max_{y \in \{1, \dots, k^T\}}\int_{A} 
        \mbb{I}\{h^{*}_T(x) = y\}
        dF^{T}_{X}.
    \end{align*}
    Since $\mc{A}$ is the optimal partition for $F^T$, we have that $h_T^*$ is constant on $A \in \mc{A}$, say $h_T^* = y_A$. The integral above by is maximized by taking $y$ to be $y_A$, as any other $y$ would make it zero. Thus the indicator is always equal to one, and so
     \begin{align*}
        TS(F^T, F^S) & = 
        \sum_{A \in \mc{A}}\max_{y \in \{1, \dots, k^T\}}\int_{A} \mbb{I}\{h^{*}_T(x) = y\} dF^{T}_{X} \\
        & = \sum_{A \in \mc{A}}\int_{A}dF^T_X \\
        & = 1
    \end{align*}
    Since $\mc{A}$ is a partition for $\mc{X}$. Thus $TS(F^T, F^S) = 1$. By symmetry, $TS(F^S, F^T) = 1$ also, and so the desired result. 
\end{proof}
\section{Description of pairwise properties of XOR, Quads, R-XOR and F-XOR}\label{app:illustrative}

The four distributions that we study in Section \ref{subsec:illustrative} have illustrative pairwise adjusted task similarities. We highlight four of these properties.

1. XOR and Quads share an optimal partition $ \mc{A}^{*} $, implying that $ ATS(\text{XOR, Quads}) = ATS(\text{Quads, XOR}) = 1 $ by Theorem \ref{thm:ats=1=ats}.

2. XOR and R-XOR are orthogonal. Letting XOR be the target task, the optimal partitions induced by R-XOR are such that $ y = 0 $ and $ y = 1 $ maximize the integrand in Equation \eqref{eq:task-similarity}. Hence the argmax is not unique for all of $ \mc{X} $ and $ ATS(\text{XOR, R-XOR}) = 0 $. The same is true for $ ATS(\text{R-XOR, XOR}) $.

3. XOR is adversarial for F-XOR and F-XOR is maximally similar to XOR. Focusing on $ ATS(\text{F-XOR, XOR}) $, for each quadrant both $ y = 0 $ and $ y = 1 $ maximize the integrand in Equation \eqref{eq:task-similarity}. Hence the argmax is not unique for all of $ \mc{X} $ and $ ATS(\text{F-XOR, XOR}) = 0 $. On the other side, setting $ y = 0 $ or $ y = 1 $ for each quadrant within a quadrant maximizes the integrand in Equation \eqref{eq:task-similarity} and, moreover, is equal to the optimal decision rule for XOR. Hence, $ ATS(\text{XOR, F-XOR}) = 1 $.

4. R-XOR is adversarial for F-XOR and $ ATS(\text{R-XOR, F-XOR}) $ is neither 0 nor 1.  Again, for each cell in the optimal partition induced by R-XOR, both $ y = 0 $ and $ y = 1 $ optimize the integrand of Equation \eqref{eq:task-similarity} when applied to F-XOR, implying $ ATS(\text{F-XOR, R-XOR}) = 0 $. When applying the optimal partition induced by F-XOR to R-XOR, the cells that do not contain either the lines $ x_{0} = x_{1} $ or $ x_{0} = -x_{1} $ are uniquely maximized by either $ y = 0 $ or $ y = 1 $, meaning $ ATS(\text{F-XOR, R-XOR}) > 0 $. On the other side, the cells of the optimal partition of F-XOR that do contain either of those lines are such that $ y = 0 $ and $ y = 1 $ both maximize the integrand of Equation \eqref{eq:task-similarity}. Hence, $ ATS(\text{R-XOR, F-XOR}) < 1 $. 





\section{Network Training Details}\label{app:network}
All networks in the experiments of section \ref{sec:CIFAR} were trained with the same architecture, as detailed in the pseudocode below:
\begin{verbatim}
    x = Conv2D(filters=8,
               kernel_size=(3, 3),
               strides=(2, 2))(input_layer)
   
    x = Conv2D(filters=32,
               kernel_size=(3, 3),
               strides=(2, 2))(x)
   
    x = Flatten()(x)
    x = BatchNormalization()(x)
    rep = Dense(1024, activation=relu)(x)
    x = Dropout(.5)(rep)
    y_hat = Dense(5, activation=softmax)(x)
\end{verbatim}
These networks were trained for 20 epochs with the Adam optimizer, with a learning rate of 1e-4, a $beta_1$ of 0.9, a $beta_2$ of 0.999 and an $\epsilon$ of 1e-7, using a categorical crossentropy objective.
\label{appendix}


\end{document}